\newtheorem{theorem}{Theorem}[section]
\definecolor{safetyColor}{HTML}{7ac74f}
\definecolor{biodiversitycolor}{HTML}{fb4d3d}
\definecolor{3dmodelColor}{HTML}{345995}
\definecolor{applicationColor}{HTML}{eac435}
\definecolor{backgroundColor}{HTML}{2a9d8f}
\definecolor{challengeColor}{HTML}{606c38}
\definecolor{applicationBackground}{HTML}{ffb703}
\tikzset{
 base node/.style={
 rounded corners=3pt,
                        % draw=gray!40,
                        % fill=white,
                        % drop shadow,
%  text width=1cm,
 align=center,
 inner sep=3pt
 },
 category box/.style={
 rectangle,
 rounded corners,
 draw=gray!30,
%  fill=gray!5,
 inner sep=3pt
 },
 arrow/.style={
 -latex,
                        % >=stealth,
 ultra thick,
 opacity=0.9
 },
 3dmodels/.style={base node, fill=white, draw=black},
 biodiversity/.style={base node, fill=biodiversitycolor!5, draw=biodiversitycolor},
 safety/.style={base node, fill=safetyColor!5, draw=safetyColor},
 applications/.style={base node, fill=white, draw=applicationColor},
 title/.style={minimum height=0cm, align=center}
}
\title{D-CAT: Decoupled Cross-Attention Transfer between Sensor Modalities for Unimodal Inference}
\author{Leen Daher, Zhaobo Wang, and Malcolm Mielle\orcidlink{0000-0002-3079-0512}% <-this % stops a space
        \thanks{*This work was supported by Schindler AG}% <-this % stops a space
    \thanks{Zhaobo Wang and Leen Daher are with Ecole Polytechnique Federale de Lausanne, Lausanne, Switzerland.
            {\tt\small
                @epfl.ch, @epfl.ch    }}%
    \thanks{Malcolm Mielle is with Schindler EPFL Lab, Lausanne, Switzerland
            {\tt\small malcolm.mielle@ik.me}}%
}
\begin{document}

\maketitle
\begin{abstract}

    Cross-modal transfer learning is used to improve multi-modal classification models (e.g., for human activity recognition in human-robot collaboration).
    However, existing methods require paired sensor data at both training and inference, limiting deployment in resource-constrained environments where full sensor suites are not economically and technically usable.
    To address this, we propose Decoupled Cross-Attention Transfer (D-CAT), a framework that aligns modality-specific representations without requiring joint sensor modality during inference.
    Our approach combines a self-attention module for feature extraction with a novel cross-attention alignment loss, which enforces the alignment of sensors' feature spaces without requiring the coupling of the classification pipelines of both modalities.
    We evaluate D-CAT on three multi-modal human activity datasets (IMU, video, and audio) under both in-distribution and out-of-distribution scenarios, comparing against uni-modal models.
    Results show that in in-distribution scenarios, transferring from high-performing modalities (e.g., video to IMU) yields up to $+10$\% F1-score gains over uni-modal training.
    In out-of-distribution scenarios, even weaker source modalities (e.g., IMU to video) improve target performance, as long as the target model isn't overfitted on the training data.
    By enabling single-sensor inference with cross-modal knowledge, D-CAT reduces hardware redundancy for perception systems while maintaining accuracy, which is critical for cost-sensitive or adaptive deployments (e.g., assistive robots in homes with variable sensor availability).
    Code is available at https://github.com/Schindler-EPFL-Lab/D-CAT.

    % We anticipate that our method will be a starting point for more research knowledge transfer method with different training and inference requirements.

    % Results show that the method improves weaker modalities in-distribution, while in out-of-distribution scenarios it remains effective but with less predictable gains due to generalization challenges.
\end{abstract}
\section{Introduction}

The rapid miniaturization and cost reduction of sensors have allowed significant progress in robotics, particularly in human-robot interaction, collaborative systems, and imitation learning.
By generating richer, more informative data, significant progress has been achieved in how robots can perceive, interpret, and respond to human behavior.

Yet, a critical challenge persists: while multi-sensor fusion enhances accuracy, deploying such systems at scale remains impractical.
In real-world applications---especially in industrial environments---the feasibility of equipping every operational unit or worker with extensive multi-sensor arrays (e.g., cameras, microphones, and IMUs) is limited by cost, complexity, and scalability.
Although time-limited data collection campaigns can leverage multiple sensors to curate rich training datasets, reproducing this level of sensory richness in real-world deployments is often economically or logistically infeasible.

To solve this situation, one possible solution is to develop a system that exploits multi-modal data during training but operates with only a subset of sensors at inference time.
In such a framework, each modality's classification strategy must remain independent, while knowledge is transferred between all modalities to maximize individual performance through collaborative training.
Yet, current multi-modal fusion methodologies assume the presence of all sensor modalities at both training and inference; a coupling that renders them incompatible with real-world constraints.
This disparity between what is achievable at small and large scales poses the question of how to design systems that retain the benefits of multi-modal learning, while avoiding the practical limitations of real-world robotics?

This work introduces a novel framework for decoupled knowledge transfer between arbitrary sensor modalities.
Our approach employs a modality-specific encoder together with a self-attention mechanism to capture modality-specific features.
To enable knowledge transfer between two sensor modalities, we propose a novel cross-attention loss function that optimizes the target modality's embeddings as a linear combination of the source modality's embeddings.
This ensures knowledge transfer from source to target while preserving the target modality's accuracy.

We validate our method through extensive experiments on three datasets, evaluating transfer performance across IMU, video, and audio modalities.
Our framework enables scalable sensor-fusion by leveraging multi-modal data during training while only requiring a subset of sensors at inference.

\begin{figure*}[t]{\textwidth=1cm}
  \centering
  \vspace{1mm}
  \begin{tikzpicture}[scale=1, transform shape]

    \begin{scope}[local bounding box=modality_A]

      \node[text width=1.5cm, align=center] (input_seq) at (15.35, 0) {Sensor\\signal A};
      % \matrix (input_seq) [matrix of nodes, nodes={draw, anchor=center}, text height=4mm,text width=4mm] at (15, 0) {
      %   $a_1$  \\
      %   \vdots \\
      %   $a_n$  \\
      % }   ;

      \node [
        draw = orange,
        fill = orange!7,
        rectangle,
        rounded corners=2pt,
        minimum width=0.5cm,
        minimum height=0.5cm,
        text width=1.5cm,
        align=center,
        left = 0.3cm of input_seq
      ] (a_enc) {
        Modality A Encoder
      };

      \node [
        draw=green!30!black,
        fill = green!30!black!15!white,
        rectangle,
        rounded corners=2pt,
        minimum width=0.5cm,
        minimum height=0.5cm,
        text width=2.2cm,
        align=center,
        left = 0.5cm of a_enc
      ] (self_attn_1) {
        Self-Attention Module
      };

    \end{scope}

    \node[] at (10, 1) {\twemoji{snowflake} frozen};

    \begin{scope}[local bounding box=modality_B]

      \node[text width=1.5cm, align=center] (input_seq_2) at (-0.37, 0) {Sensor\\signal B};

      % \matrix (input_seq_2) [ matrix of nodes,
      %   nodes={draw, anchor=center},
      %   text height=4mm,text width=4mm
      % ] at (0, 0) {
      %   $b_1$  \\
      %   \vdots \\
      %   $b_n$  \\
      % };

      \node [
        draw= orange,
        fill = orange!7,
        rectangle,
        rounded corners=2pt,
        minimum width=0.5cm,
        minimum height=0.5cm,
        text width=1.5cm,
        align=center,
        right = 0.3cm of input_seq_2
      ] (b_enc) {
        Modality B Encoder
      };

      \node [
        draw=green!30!black,
        fill = green!30!black!15!white,
        rectangle,
        rounded corners=2pt,
        minimum width=0.5cm,
        minimum height=0.5cm,
        text width=2.2cm,
        align=center,
        right = 0.5cm of b_enc
      ] (self_attn_2) {
        Self-Attention Module
      };

    \end{scope}

    \node [
      draw,
      fill = white,
      rectangle,
      rounded corners=2pt,
      minimum width=0.5cm,
      minimum height=0.5cm,
      text width=2.2cm,
      align=center,
      left = 0.75cm of self_attn_1,
    ] (l_ca) {
      Cross-Attention Loss
    };

    \node [
      draw,
      fill = white,
      rectangle,
      rounded corners=2pt,
      minimum width=0.5cm,
      minimum height=0.5cm,
      text width=2.2cm,
      align=center,
      below = 0.1cm of l_ca,
    ] (l_ce) {
      Classification Loss
    };

    \draw[-latex] (input_seq) -- (a_enc);
    \draw[-latex]  (a_enc) -- (self_attn_1);
    \draw[-latex] (self_attn_1) -- (l_ca);
    \draw[-latex] (input_seq_2) -- (b_enc);
    \draw[-latex] (b_enc) -- (self_attn_2);
    \draw[-latex] ([xshift=0.5cm]self_attn_2.south) -- (4.775, -0.85) -- ([yshift=0.175cm]l_ce.west);
    \draw[-latex] ( self_attn_2) -- (l_ca);

    \draw[-latex, dashed] (l_ca) -- (7.5, 1) -- (4.275, 1) --(self_attn_2);
    \draw[-latex, dashed] (l_ca) -- (7.5, 1) -- (1.675, 1) --(b_enc);

    \draw[-latex, dashed] (l_ce)  -- (4.275, -1) -- (self_attn_2);
    \draw[-latex, dashed] (l_ce)  -- (1.675, -1) -- (b_enc);

    \node[] at (2.75, -1.2) {\small backpropagation};
    \node[] at (2.75, 1.2) {\small backpropagation};

    \begin{pgfonlayer}{background}
      \node[safety, fit=(modality_A)] {};
      \node[biodiversity, fit=(modality_B)] {};
    \end{pgfonlayer}

  \end{tikzpicture}
  \caption{\textbf{Cross-modal transfer architecture.} Sensor modality A (source) and B (target) classification networks each consist of an encoder and a self-attention module.
    Modality A network is pretrained and frozen during the training of modality B's classification network.
    % Modality B (target) is processed by a trainable encoder and self-attention module.
    % It is pretrained and is frozen during training.
    The classification network of modality B is then trained against a standard classification loss as well as a novel cross-attention loss that aims to align the two modalities' query and value embeddings, thus enabling knowledge transfer from A to B.}
  \label{fig:cross_arch}
\end{figure*}
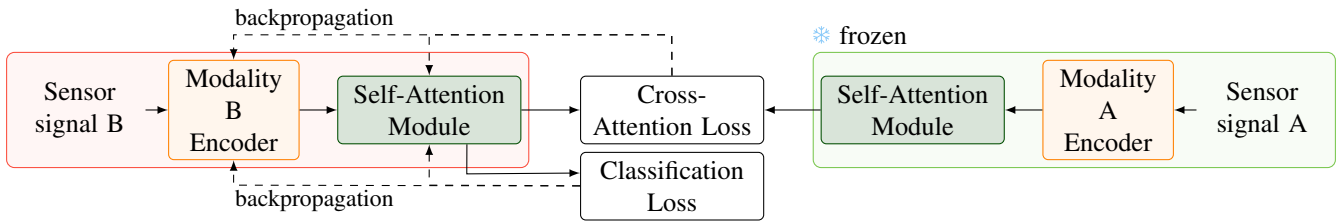

\section{Related Work}

\subsection{Human Action Recognition}
\label{sec:related:har}

Human action recognition (HAR) has been extensively studied~\cite{aggarwal2014human, wang2023comprehensive,vrigkas2015review, JAHANMAHIN2022102404} across diverse sensor modalities, driven by applications in human-robot interactions~\cite{JAHANMAHIN2022102404}, video retrieval~\cite{4130427}, or safety~\cite{haddad2025uac}.
While RGB video or image-based approaches are the most popular due to the ubiquity of cameras~\cite{wang2023comprehensive}, researchers have explored alternatives---including depth sensors~\cite{zhao2012combing}, audio sensors~\cite{Liang_2019},
% motion sensors~\cite{7457172, 8141959}, 
or infrared thermal images~\cite{skeleton} for low-light robustness---to improve accuracy, computational cost, and privacy.

Multi-modal HAR methods frequently pairs video with depth~\cite{singh2021deeply, zhao2012combing, mukherjee2020human} (e.g., via a Kinectsensor) or audio~\cite{cristina2024audio, humactrec}, leveraging complementary strengths in both sensor modalities.
However, visual data introduces privacy risks and processing overhead, while audio---though lightweight---may lack discriminative power for fine-grained actions.
% Prior work has shown promise in fusing audio with vibration sensors to achieve 87\% accuracy on 25 daily activities~\cite{humactrec}.
% dehzangi2018imu
IMUs offer a low-cost, privacy-preserving alternative for HAR~\cite{zhuang2019design, hou2020study, ayman2019efficient} that is already present in many common devices (e.g., smartphones and smartwatches).
However, IMU data suffers from noise and ambiguity for similar gestures.
Furthermore, a common limitation of all the previous works using multi-modal sensor data is the reliance on coupled classification methodologies; it assumed that all sensor modalities are always available, at training and during use.

% While visual data is very descriptive, using those sensor usually inccurs a cost in data processing and privacy concerns.
% Audio data is another cose effective solution often used for HAR; \textcite{stork2012audio} proposed to use audio only with non-markovian ensemble voting for HAR.
% \textcite{humactrec} showed that it is possible to use audio and vibration sensors together to classify human activities with high accuracy, while also reducing computational power and limiting privacy concerns.
% Based on a deep neural network architecture,
% their model achieved an accuracy up to 87\% on a dataset that includes 25 daily activities performed by 14 subjects.

\subsection{Cross-Modal Transfer Learning}
% Cross-modal transfer learning methods allow to transfer knowledge from one modality to another. It is widely used in the context where a model trained on a task with ample data is leveraged to improve the performance of the same model when provided with data from another modality that has too few training samples. Additionally, it could also be leveraged to improve the performance of a new model trained on a different modality. 

% The described method can be categorized as instance-wise transfer and feature-wise transfer.

% through leveraging knowledge from a model trained on a specific task with sufficient training samples to enhance the performance of when the model, r another model is presented with insufficint trainig samples.

Cross-modal transfer learning focuses on transferring knowledge between sensor modalities.
%  to another to improve the model performance on the first modality, e.g., where samples are scarce.
% The same pre-trained model can be fine-tuned on the low-data modality, or its learned representations can teach a brand-new model on that modality.
Given a source and target modality, some work~\cite{zhang2023MultiS, ma2024learningmodalityknowledgealignment,kwon2020imutubeautomaticextractionvirtual} first transform the source modality data to a feature set compatible with the target modality.
For example, \textcite{ma2024learningmodalityknowledgealignment} train a modality-specific encoder that outputs features that are directly used to pre-train a backbone model used on the target modality.
Then they refine both the encoder and the backbone model using data from the target modality, leading to higher accuracy in the predictions.
On the other hand, \textcite{kwon2020imutubeautomaticextractionvirtual} generate virtual accelerometry data from 3D human poses extracted from videos.
The virtual IMU data is used to pre-train an HAR model that is finetuned on real IMU data, yielding a 10\% improvement in accuracy.

Other methods focus on aligning the feature representations of different models~\cite{wu2024Class-, bharti2019HuMAn:, Deldari_2022, 9157122, deldari2022cocoa}.
% 
% [cite cocoa paper] proposes a novel contrastive loss to that allows a model to learn cross-modal representation by pushing apart instances that are dissimal and pushing closer instances that are simialr.
\textcite{Deldari_2022} proposes a simple yet effective cross-modal contrastive loss that learns joint representations by pulling together temporally aligned samples from different sensors (positives) and pushing apart temporally distant or unrelated samples (negatives).
% Cross modality contrastive learning (COCOA) employs a novel objective function to
% learn representations from multi-sensor data by computing the cross-correlation between different
% modalities and minimizing the similarity between different instances.
%%%%%%%%%%%%%%%% im not sure how to clasify the papers ive read alr honeslty
\textcite{9157122} propose a method that transfers knowledge between image and text models. Their method leverages both intra-and inter-modal information is implemented through self-attention and cross-attention modules, respectively.
% \textcite{deldari2022cocoa}
% \textbf{[this one is not really transfer learning as much as its fusing both modalities in a way so need to recheck the wording] }

% \subsection{May be some papers about self and cross attention?}

% Tone et al.[40] achieved transfer learning by
% projecting IMU features into image feature spaces. They first trained an image model and extracted the last layer’s embeddings as feature representations. A 4-layer Multi-Layer Perceptron (MLP)
% was then trained to project IMU instances into the same feature space, enabling the transfer of
% knowledge from the image model to IMU-based predictions.

%%%%%%%%%% What are the gaps?
%%%%%%%%%% What are thechallenges we are trying to solve?

Similarly to \cref{sec:related:har}, a common limitation of the previous works is the assumption that all sensor modalities are present at both training and inference times; it is not possible to use them for applications where only a subset of sensors would be available at inference time.
Unlike prior efforts focused on coupled modality pairs, we propose to transfer knowledge between arbitrary sensors at training time, while maintaining the ability to use a single modality at inference.
% Our work advances cross-modal transfer learning, targeting HAR applications.

\section{Preliminaries}\label{section:prel}

In this section, we present the concepts of self-attention and cross-attention, since those are instrumental to our method.
% Attention mechanisms have been widely used for learning inter and intra-modality features for input sequences. 

\subsection{Self-attention}\label{section:self_att}

Self-attention~\cite{attalluneed} assigns scores that measure the importance of each element with respect to the other elements in the same sequence. These scores are calculated by mapping the embeddings of a model into queries, keys, and values, which are then used to generate new weighted embeddings.

The projection into the key, query, and value spaces is implemented using linear layers as in Eq. \ref{eq:kqv}:

\begin{equation} \label{eq:kqv}
    \begin{split}
        K & = E W_{K},\quad W_{K} \in \mathbb{R}^{d_{\text{model}}\times d_k},\quad K \in \mathbb{R}^{\text{SL}\times d_k} \\
        Q & = E W_{Q},\quad W_{Q} \in \mathbb{R}^{d_{\text{model}}\times d_k},\quad Q \in \mathbb{R}^{\text{SL}\times d_k} \\
        V & = E W_{V},\quad W_{V} \in \mathbb{R}^{d_{\text{model}}\times d_v},\quad V \in \mathbb{R}^{\text{SL}\times d_v}
    \end{split}
\end{equation}

Here, SL is the sequence's length and \(E\in\mathbb{R}^{\text{SL}\times d_{\text{model}}}\) is the embedding matrix for the input sequence of the model. The matrices $W_{K}$, $W_{Q}$, and $W_{V}$ are learnable parameters that project the embeddings into the key, query, and value spaces, respectively. $d_{model}$ is the dimension of the input embeddings and may differ across modalities, while $d_k$ and $d_v$
are the projection dimensions, referred to as $d_{out}$

The self-attention output is computed as:

\begin{equation}\label{eq:att_score}
    \text{Self-Attention}(K,Q,V) = \text{softmax}\left(\frac{Q K^\top}{\sqrt{d_{\text{out}}}}\right) V
\end{equation}

This operation produces a new sequence of the same length, where each element is a weighted sum of all elements in the original sequence.

\subsection{Cross-Attention}\label{section:cross_att}

On the other hand, cross-attention relates elements from two different sequences; the query is taken from one sequence and the keys and values from another.
% , which may come from different modalities.

Cross-attention from modality A (source) to modality B (target) is expressed as:

\begin{equation}
    \text{Cross-Attention}(K,Q,V) = \text{softmax}\left(\frac{Q_B K_A^\top}{\sqrt{d_{\text{out}}}}\right) V_A.
\end{equation}
where $Q_B$ is the target model's query matrix, and $K_A$ and $V_A$ are the source model's key and value matrices, respectively.

\section{Method}

D-CAT leverages modality-specific encoders with self-attention to extract discriminative features from individual sensor inputs.
Then, to align the target and source feature spaces, we propose a novel cross-attention loss that enforces correspondence between modalities, while preserving architectural decoupling in their downstream classification networks.
During training, the source encoder and self-attention modules remain frozen, enabling transfer to the target domain without changing the source's pretrained feature space.

While we will demonstrate our framework on three common sensor modalities in robotic tasks---inertial measurement unit (IMU) signals, video sequences, and audio signals---in \cref{sec:eval}, it should be noted that, given a modality encoder, D-CAT's design generalizes to arbitrary sensor types, offering a flexible solution for modality-agnostic knowledge transfer.
% The proposed method uses cross-attention features to transfer relevant information from a thus improving the performance of the target modality.
% To achieve this, we introduce a loss function that encourages the alignment of the cross-attention features between the source and target modalities. 
The overall framework is presented in Figure \ref{fig:cross_arch}.
% It consists of three main components, modality-specific encoders for feature extraction, self-attention module, and the proposed cross-attention loss function.

\subsection{Self-Attention Module}
\label{sec:method:selfattention}

The sequences of embeddings extracted from the modality-specific encoders are used as input to a self-attention module.
The self-attention module captures dependencies within each modality as defined in \cref{section:self_att}, following \cref{eq:att_score}.
Specifically, the self-attention module provides the key, query, and value independently for each modality.
Those keys, queries, and values will be used by the proposed cross-attention loss of this paper.
% This step is necessary for preparing the embeddings for transfer learning, since in the cross-attention stage the exchange of information between modalities takes place through the interaction of their key, query, and value projections.

\subsection{Cross-Attention Loss}

% To enable transfer learning between modalities without needing to have both sensor modality at inference time, we introduce a novel cross-attention alignment loss to align the target modality embeddings $E_B$ with the source modality embeddings $E_A$.
% This loss is combined with a cross-entropy loss for classification to form the overall training objective.

% The cross-attention loss is designed to serve the role of the cross-attention module described in  \cref{section:cross_att}.
% Instead of explicitly performing cross-attention at inference, its purpose is reformulated as a loss function.
% This allows the model to be trained using two modalities, while requiring a single modality at inference time.

As shown in \cref{section:prel}, cross-attention relates the elements from two different sequences by using the query from one sequence and the key and value from the other sequence.
However, cross-attention creates a strong dependency between modalities, where the two sensor data cannot be decoupled at inference.
To remove this dependency, we propose a novel cross-attention loss during training to align the cross-attention embeddings ($Q K^T V$) of each modality.

% The alignment is expressed in \cref{eq:emb_al}:
Given a pretrained source classification network from which we aim to transfer knowledge to a target classification network, we first freeze the weights of the source model.
Hence, the cross-attention loss will only updates the target modality encoder and self-attention module, aligning them with the fixed source representations.

The proposed cross-attention loss aims to satisfy:
\begin{equation}\label{eq:emb_al}
    Q_BK_B^TV_B \approx Q_BK_A^TV_A
\end{equation}
with $Q_B$, $K_B$, and $V_B$ the query, key, and value projections of the target modality embeddings, and $K_A$, $V_A$ the key and value projections of the source modality embeddings.

Since the aim is to align the models' embeddings through minimizing the distance between the self-attention and cross-attention embeddings, the Frobenius norm is used to obtain the Euclidean length between each element of a matrix:
\begin{equation}\label{eq:frob}
    ||A||_F = \sqrt{\sum_{i,j}|a_{ij}|^2},
\end{equation}
where $a_{ij}$ denotes the element at the $i^{th}$ row and $j^{th}$ column.
% to minimize the euclidean distance between the elements of each of the matrices.
This yields the following cross-attention loss:
\begin{equation}\label{eq:lca}
    L_{CA}=||\overline{Q_BK_B^TV_B} -\overline{Q_BK_A^TV_A}||_F
\end{equation}
where $||\overline{\cdot}||$ denotes the normalized matrices.

The query matrices can be factored out from each side, yielding a final cross-attention loss:
\begin{equation}\label{eq:lca_f}
    L_{CA}=||\overline{K_B^TV_B} -\overline{K_A^TV_A}||_F
\end{equation}

The loss aligns self-attention ($K_A^TV_A$) and cross-attention ($K_B^TV_B$) values, assuming that the softmax and scaling of \cref{eq:att_score} are dropped.
Indeed, their removal does not affect the alignment quality, since, as proven in \cref{lemma}, $K_B$ converges to a linear mapping of $K_A$, and similarly, $V_B$ converges to a linear mapping of $V_A$.
Hence, the attention terms remain comparable, ensuring that the alignment between modalities is preserved even without the softmax and scaling.

\begin{theorem}\label{lemma}
    Assume matrix M $\in \mathbb{R}^{m\times n}$ with $rank(M) = r$ where:
    \begin{equation}\label{eq:M}
        M = AB = A'B',
    \end{equation}
    where $A, A' \in \mathbb{R}^{m\times k}$ and $B, B' \in \mathbb{R}^{k\times n}$. Then there exist matrices $R \in \mathbb{R}^{k\times k'}$ and $S \in \mathbb{R}^{k'\times k}$ s.t:
    \begin{equation}
        \begin{split}
            A' & = AR \\
            B' & = SB
        \end{split}
    \end{equation}
\end{theorem}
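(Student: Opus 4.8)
The plan is to translate the matrix identity $M = AB = A'B'$ into statements about column and row spaces, and then invoke the elementary fact that $\operatorname{col}(A') \subseteq \operatorname{col}(A)$ holds if and only if $A' = AR$ for some $R$ (each column of $A'$ being expressible as a combination of the columns of $A$), together with the dual statement for row spaces giving $B' = SB$.

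First I would record the containments that hold for \emph{any} such factorization: since every column of $M = AB$ is a linear combination of the columns of $A$, we have $\operatorname{col}(M) \subseteq \operatorname{col}(A)$, and identically $\operatorname{col}(M) \subseteq \operatorname{col}(A')$; dually $\operatorname{row}(M) \subseteq \operatorname{row}(B)$ and $\operatorname{row}(M) \subseteq \operatorname{row}(B')$. The substance of the theorem is to upgrade these one-sided containments into the equalities $\operatorname{col}(A) = \operatorname{col}(A') = \operatorname{col}(M)$ and $\operatorname{row}(B) = \operatorname{row}(B') = \operatorname{row}(M)$. This is where the rank hypothesis enters: reading the hypothesis as asserting that each factorization realizes $M$ at its minimal inner dimension (so that every factor has rank exactly $r$), we get $\dim \operatorname{col}(A') = r = \dim \operatorname{col}(M)$, which together with $\operatorname{col}(M) \subseteq \operatorname{col}(A')$ forces $\operatorname{col}(A') = \operatorname{col}(M)$, and symmetrically $\operatorname{col}(A) = \operatorname{col}(M)$. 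Hence the two column spaces coincide, and likewise the two row spaces.

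With $\operatorname{col}(A') = \operatorname{col}(A)$ in hand, I construct $R$ by expressing each column of $A'$ in terms of the columns of $A$ and collecting the coefficient vectors as the columns of $R$, giving $A' = AR$. For $B' = SB$ I would run the identical argument on $M^{\top} = B^{\top} A^{\top} = (B')^{\top} (A')^{\top}$ to obtain $\operatorname{row}(B') = \operatorname{row}(B)$ and hence $B' = SB$; alternatively, substituting $A' = AR$ into $AB = A'B'$ gives $A(B - RB') = 0$, and since $A$ has full column rank it is injective as a map, so $B = RB'$, exhibiting $S$ as a (one-sided) inverse of $R$ in the square full-rank case.

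I expect the rank-pinning step to be the main obstacle, and it is worth flagging that the conclusion genuinely depends on it. If the factors were permitted to carry rank above $r$, the claim fails: for instance $M = 0$ admits factorizations $A, A'$ with disjoint column spaces, for which no $R$ satisfies $A' = AR$. Thus the heart of a correct proof is arguing that the hypotheses (minimal inner dimension / full-rank factorizations) force $\operatorname{rank}(A) = \operatorname{rank}(A') = \operatorname{rank}(B) = \operatorname{rank}(B') = r$; once the column and row spaces are shown to agree, producing $R$ and $S$ is routine linear algebra.
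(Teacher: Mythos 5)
Your proposal follows the same skeleton as the paper's proof: read $M = AB = A'B'$ as statements about column and row spaces, express each column of $A'$ as a linear combination of the columns of $A$, collect the coefficients into $R$, and run the dual argument on rows to get $S$. The difference is that you supply the one step the paper skips, and it is precisely the step where the paper's own argument is logically incomplete. The paper establishes only the containments $\operatorname{col}(M) \subseteq \operatorname{col}(A)$ and $\operatorname{col}(M) \subseteq \operatorname{col}(A')$, and then asserts that ``every column of $A'$ is a linear combination of the columns of $A$'' --- which does not follow from those two containments alone. You correctly identify that one must upgrade the containments to equalities via the rank hypothesis: reading the hypothesis as saying each factorization is a rank-$r$ (minimal) factorization, dimension counting gives $\dim\operatorname{col}(A') = r = \dim\operatorname{col}(M)$, hence $\operatorname{col}(A') = \operatorname{col}(M) = \operatorname{col}(A)$, after which constructing $R$ is routine. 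Your $M = 0$ counterexample also shows the statement is genuinely false without such rank-pinning, since then $A$ and $A'$ may have disjoint column spaces and no $R$ with $A' = AR$ exists. So your proof is the paper's proof done carefully: same route, plus the dimension argument the paper needs but omits. One minor caveat: your alternative derivation of $B' = SB$ by substituting $A' = AR$ yields $B = RB'$, which only gives $B'$ in terms of $B$ when $R$ is invertible; the transpose/row-space argument you give first is the clean one and is the one that mirrors the paper.
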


\begin{proof}
    % Since $rank(A)=r$, then Matrix $A$ has $r$ linearly independent columns. This means that any other column in $A$ can be achieved by a linear combination of those $r$ columns.
    Since $M=AB$, every column of $M$ lies in the span of the columns of $A$. Likewise, since $M=A'B'$, every column of M lies in the span of the columns of $A'$. Hence, the column-space of M is contained in both the span of $A$ and the span of $A'$. If that is not the case, then $A'$ cannot produce the columns of $M$.

    % The column-space of $M$ is the set of all vectors that can be built by scaling and adding the columns of $M$. Since $M=AB$, then each column of $M$ is itself a linear combination of the columns of $A$, which means that the $r$ independent columns of A also span the column-space of M.

    % Following from the previous demonstration, provided that $M = A'B'$, then the columns of $M$ are also a linear combination of the columns of $A'$, but $M$ already lies in the span of the columns of $A$, which yields that every column of $A'$ must also lie in the span of the columns of $A$.
    % If that is not the case, then $A'$ cannot produce the columns of $M$.

    So far, we have proved that every column of $A'$ is a linear combination of the columns of $A$. This can be reformulated as:
    \begin{equation}\label{eq:ar_reform}
        a'_j=r_{1j}a_1+r_{2j}a_2 + ...+r_{rj}a_k,
    \end{equation}
    where $a'_j$ is a column in $A'$, $(a_1, ..., a_k)$ are the columns of A, and $(r_{1j},...,r_{rj})$ are the coefficients needed to get the value of the column in $A'$. These coefficients can be collected into a matrix $R$:
    \begin{equation}\label{eq:a_prime}
        A' = AR
    \end{equation}

    Similarly, because $M =AB=A'B'$, each row of M lies in the span of the rows of B and also in the span of the rows of $B'$. Therefore, every row of $B'$ can be written as a linear combination of the rows of B. This can be written as

    \begin{equation}\label{eq:bs_reform}
        b'_i=s_{1i}b_1+s_{2i}b_2 + ...+r_{si}b_k,
    \end{equation}
    where $b'_i$ is a row in $B'$, $(b_1, ..., b_k)$ are the rows of B, and $(s_{1i},...,s_{ri})$ are the coefficients. These coefficients can be collected into a matrix $S$:
    \begin{equation}\label{eq:b_prim}
        B'=SB
    \end{equation}

    Finally, substituting \cref{eq:a_prime} and \cref{eq:b_prim} into \cref{eq:M} yields:
    \begin{equation}
        \begin{split}
            AB & = A'B'    \\
               & =(AR)(SB) \\
        \end{split}
    \end{equation}
\end{proof}

% Then, since $A$ is not a square matrix, but it has $r$ independent columns, then the $r\times r$ matrix $A^TA$ is invertible, and:
% \begin{equation}
%     \begin{split}
%         B  & = RB'      \\
%         B' & = R^{-1}B.
%     \end{split}
% \end{equation}

Applying \cref{lemma} to our case, where $A = K_A^T$, $A' = K_B^T$, $B = V_A$, and $B'= V_B$ yields:
\begin{equation}
    \begin{split}
         & K_B^T\to K_A^TR \\
         & V_B \to  SV_A
    \end{split}
\end{equation}

where $R$ and $S$ are the linear map matrices.

Thus, it is proven that the cross-attention loss enforces $K_B$ and $V_B$ to converge to linear mappings of $K_A$ and $V_A$, respectively, ensuring that the cross-attention term $K^T_BV_B$ remains aligned with the self-attention term $K_A^TV_A$.

\subsection{Masked Cross-Modal Alignment}\label{sec:pos_trans}
Since no classification model is a perfect oracle, all models will return erroneous classifications for some input data.
However, such incorrect classification should not be used for knowledge transfer since the source model has not been able to learn from the extracted features.

To avoid aligning embeddings of wrongly classified samples from the source modality, we use an indicator function $\mathds{1}$ with the cross-attention loss to take the cross-attention loss into account only for embeddings of correctly classified samples:
This $\mathds{1}$ is expressed as:
\begin{equation}
    \mathds{1}(x) =
    \begin{cases}
        1 & \text{if } x \text{ is a correct classification},    \\
        0 & \text{if } x \text{ is an incorrect classification}.
    \end{cases}
\end{equation}
% a regularization parameter ($\alpha$) is introduced to the cross-attention loss to allow alignment with embeddings of correctly classified samples only.

Hence, the final cross-attention loss is expressed as:
\begin{equation}
    L_{CA}=\mathds{1}(x) \cdot ||\overline{K_B^TV_B} -\overline{K_A^TV_A}||_F,
\end{equation}

% where $\alpha = 1$ in the case where modality A model classifies a sample correctly, and  $\alpha = 0$ otherwise.

\subsection{Classification Loss}
Finally, the cross-entropy loss is adopted for classification:
\begin{equation}\label{eq:ce_loss}
    L_{CE} = - \frac{1}{N}\sum_{n=1}^Nlog(\frac{\text{exp}(x_{n,y_n})}{\sum^C_{c=1}\text{exp}(x_{n,c})})
\end{equation}
where N is the total number of samples, C is the total number of classes, $x_{n, y_n}$ is the true logit of the $n^{th}$ sample of the current batch, and $x_{n,c}$ is the model output logit for the class c.

The complete loss function is:
\begin{equation}\label{eq:full_loss}
    Loss = L_{CE} +\lambda L_{CA},
\end{equation}
where $\lambda$ is a non-negative hyper-parameter that balances the contribution of the cross-attention loss $L_{CA}$ relative to the cross-entropy loss $L_{CE}$

\section{Experiments}

In this section, we discuss the metrics used to measure the performance of knowledge transfer from source modality A to target modality B, the datasets, and the baseline model architectures.
The implementation of D-CAT and all baselines can be found online.\footnote{\url{https://github.com/Schindler-EPFL-Lab/D-CAT}}

\subsection{Datasets}

The proposed method is evaluated on three multi-modal datasets, each containing synchronized data from two sensor modalities.
% This section provides a summary of each dataset used.

% \subsubsection{
1) Video-IMU: the University of Electronic Science and Technology of China Multi-Modal Egocentric Activity Dataset for Continual Learning (UESTC-MMEA-CL)~\cite{UESTC} contains synchronized Video-IMU data collected from 10 participants performing 32 everyday activities.
Each activity lasts between 8 and 35 seconds and is recorded using wearable smart glasses equipped with an RGB camera and an IMU sensor.

To reduce training complexity and hardware requirements, we use a subset of the 8 activity classes, containing 6,522 samples. These classes---drinking, reading, floor sweeping, cutting fruits, washing hands, typing on a laptop, typing on a phone, and opening/closing a door---were selected because they exhibit a high degree of variability in motion intensity, interaction with objects, and sensor signal characteristics.
For the experiment, the dataset is split into $80\%$ training, $10\%$ validation, and $10\%$ test sets.

2) Audio-IMU: the Cough Audio-IMU multimodal cough dataset using wearables~\cite{10782697} consists of synchronized IMU-Audio data recorded from 13 participants performing 8 tasks, such as coughing, sneezing, and laughing. The dataset is imbalanced, with some classes having more samples than others. To address this, samples with the shortest IMU sequences, carrying minimal motion information, were removed. After this filtering, the dataset contains 2,576 samples, split into $65\%$ training, $18\%$ validation, and $17\%$ testing.

3) Audio-Video: the Visual Geometry Group-Sound (VGGSound)~\cite{vggsound} dataset contains audio-video data sourced from YouTube covering 309 classes, each with a duration of 10 seconds. Due to the computational cost of processing the entire dataset, a random subset of 9 classes was selected for evaluation. This subset contains 2,886 clips, split into $70\%$ training, $15\%$ validation, and $15\%$ testing.

\begin{table*}[t]
    \centering
    \vspace{1.45mm}
    \caption{ Training Parameters} %ik def gonna write better captions for table
    \begin{tabular}{lccccccc}
        \toprule
        Hyper-parameter      & UESTC-IMU        & UESTC-Video       & Cough-IMU        & Cough-Audio      & VGGSound-Video   & VGGSound-Audio   \\
        \midrule
        Epochs               & 10               & 100               & 10               & 100              & 100              & 150              \\
        Learning Rate        & $5\times10^{-4}$ & $5\times10^{-4} $ & $1\times10^{-4}$ & $1\times10^{-4}$ & $1\times10^{-4}$ & $1\times10^{-4}$ \\
        Dropout              & 0.8              & 0.8               & 0.8              & 0.6              & 0.5              & 0.5              \\
        Batch Size           & 16               & 8                 & 32               & 32               & 8                & 8                \\
        Weight Decay         & 0.005            & 0.005             & 0.001            & 0.0001           & 0.005            & 0.0              \\
        Window/Stride Length & 70               & 30                & 60               & 0.5 Seconds      & 30               & 2 Seconds        \\
        Seed                 & 11               & 17                & 7                & 17               & 17               & 11               \\
        \bottomrule
    \end{tabular}
    \label{tab:hyperparams}
\end{table*}

\subsection{Data Preprocessing and Modality-Specific Encoders}

Each of the sensor pipeline starts with a modality-specific encoder to extract high-level feature embeddings specific to the sensor data.
Such embeddings will be used for the self-attention module presented in \cref{sec:method:selfattention}.
Each encoder should map the input data into sequences of embeddings of dimension ($T_m \times d_{model}$), where $d_{model}$ is model dependent.

Since this paper focuses on IMU, video, and audio data, we present a data preprocessing strategy and encoder architecture for each sensor modality.
Across all three modalities, the convolution block shown in the upper section of \cref{fig:imu_enc} serves as a fundamental building unit of the encoder backbones.
% The purpose of each encoder is to transform the raw sequence of input into a sequence of high-level feature embeddings in preparation for the self-attention module. 

% With $X^{imu} \in \mathbb{R}^{T_{\text{imu}}\times C_{imu}}$ the IMU input sequence, $X^{vid} \in \mathbb{R}^{T_{\text{vid}} \times W \times H \times C_{vid}}$  the sequence of video frames, and $X^{audio} \in \mathbb{R}^{T_{\text{aud}}\times C_{audio}}$  the audio input sequence, the modality-specific encoders map these inputs into sequences of embeddings of dimension ($T_m \times d_{model}$).

\subsubsection{IMU}

IMU signals are first min-max normalized to the range $[-1,1]$.
They are then processed by a 1D Convolutional Neural Network (CNN) backbone as proposed by \textcite{Samosa}, producing a sequence of embeddings $E^{\text{imu}} \in \mathbb{R}^{T_{\text{imu}}\times d_{\text{model}}}$.
While the overall backbone design follows the Somasa architecture, the specific parameters of the convolutional and subsequent layers (kernel sizes, strides, and number of filters) are adjusted for our datasets.
The full IMU-encoder architecture can be seen in \cref{fig:imu_enc}.

%%%%%%%%%%%%%%%%%%%%%%%%%%%%%%%%
%%%%%%%%%%%%%%%%%%%%%%%%%%%%%%%%
%%%%%%%%%% IMU Model %%%%%%%%%%%
%%%%%%%%%%%%%%%%%%%%%%%%%%%%%%%%
%%%%%%%%%%%%%%%%%%%%%%%%%%%%%%%%
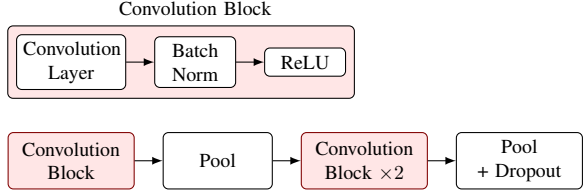
\begin{figure}[t]
  \centering
  \begin{tikzpicture}[scale=0.75, transform shape]
    \begin{scope}[local bounding box=conv_block]
      \node[
        draw,
        fill = white,
        rectangle,
        rounded corners=2pt,
        minimum width=0.5cm,
        minimum height=0.5cm,
        text width=1.7cm,
        align=center,
      ] (conv1) at (0, 1.75) {
        Convolution Layer
      };
      \node [
        draw,
        fill = white,
        rectangle,
        rounded corners=2pt,
        minimum width=0.5cm,
        minimum height=0.5cm,
        text width=1.2cm,
        align=center,
        right = 0.5cm of conv1,
      ] (bn) {
        Batch Norm
      };
      \node [
        draw,
        fill = white,
        rectangle,
        rounded corners=2pt,
        minimum width=0.5cm,
        minimum height=0.5cm,
        text width=1.2cm,
        align=center,
        right = 0.5cm of bn,
      ] (relu) {
        ReLU
      };

      \draw[-latex] (conv1) -- (bn);
      \draw[-latex] (bn) -- (relu);
    \end{scope}

    \node[above=0.25cm of bn] {Convolution Block};
    % \node[
    %     draw,
    %     fill = orange!7,
    %     rounded corners,
    %     inner sep=19pt,
    %     minimum height = 2.8cm, 
    %     minimum width = 11.63cm,
    %     % label=above:{\bfseries IMU Encoder}
    %   ] (imu) at (4.6, 0) {};
    % \node[
    %     draw= black,
    %     fill = red!10,
    %     dashed,
    %     minimum width =5.69cm,
    %     minimum height = 2.2cm,
    %     label={[xshift=-35mm,yshift=-4.9mm]north east:{Convolution Block $\times2$}}
    %   ] (convblock) at (1.76, 0) {};
    \node [
      draw=  red!50!black,
      fill = red!10,
      rectangle,
      rounded corners=2pt,
      minimum width=0.5cm,
      minimum height=1cm,
      text width=2cm,
      align=center,
    ] (convblock) {
      Convolution Block
    };
    \node [
      draw,
      fill = white,
      rectangle,
      rounded corners=2pt,
      minimum width=0.5cm,
      minimum height=1cm,
      text width=1.7cm,
      align=center,
      right = 0.5cm of convblock,
    ] (pool1) {Pool};

    \node [
      draw= red!50!black,
      fill = red!10,
      rectangle,
      rounded corners=2pt,
      minimum width=0.5cm,
      minimum height=1cm,
      text width=2cm,
      align=center,
      right = 0.5cm of pool1 ,
    ] (block3) {
      Convolution Block $\times 2$
    };
    \node [
      draw,
      fill = white,
      rectangle,
      rounded corners=2pt,
      minimum width=0.5cm,
      minimum height=1cm,
      text width=2cm,
      align=center,
      right = 0.5cm of block3 ,
    ] (pool2) {
      Pool \\+ Dropout
    };

    % \draw[-latex] (conv1) -- (bn);
    % \draw[-latex] (bn) -- (relu);
    \draw[-latex] (convblock) -- (pool1);
    \draw[-latex] (pool1) -- (block3);
    \draw[-latex] (block3) -- (pool2);
    % \draw[-latex] (pool2) -- (drop);

    \begin{pgfonlayer}{background}
      \node[draw, rounded corners=2pt, fill=red!10, fit=(conv_block)] {};
    \end{pgfonlayer}

  \end{tikzpicture}
  \caption{\textbf{IMU encoder architecture}. The model begins with a convolution layer, batch normalization, and ReLU activation, followed by a pooling layer. Two additional convolutional blocks are applied, and the encoder concludes with a pooling and dropout stage.}
  \label{fig:imu_enc}
\end{figure}

\subsubsection{Video}
Each frame of a video sequence is first resized to a fixed resolution $(224, 224)$.
The sequence is then passed through a 2D-CNN based backbone Image ResNet-101 that extracts features from each frame.
The full architecture is shown in \cref{fig:img_enc}.

% The output of this model is a sequence of embeddings denoted as $E^{\text{image}} \in \mathbb{R}^{T_{\text{image}}\times d_{\text{model}}}$.

%%%%%%%%%%%%%%%%%%%%%%%%%%%%%%%%
%%%%%%%%%%%%%%%%%%%%%%%%%%%%%%%%
%%%%%%%% Image Model %%%%%%%%%%%
%%%%%%%%%%%%%%%%%%%%%%%%%%%%%%%%
%%%%%%%%%%%%%%%%%%%%%%%%%%%%%%%%
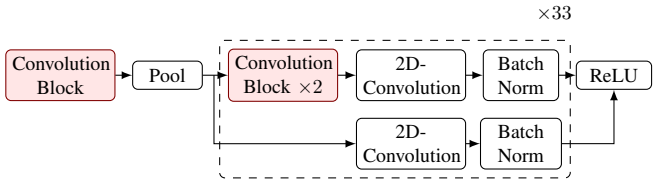
\begin{figure}[t]

  \begin{tikzpicture}[scale=0.75, transform shape]
    % \node[
    %     draw,
    %     fill = orange!7,
    %     rounded corners,
    %     minimum height = 3.4cm,
    %     minimum width = 11.63cm,
    %     % label= above:{\bfseries Image Encoder}
    %   ] (image) at (4.76, -0.6) {};

    % \node[
    % draw,
    % dashed,
    % minimum height = 2cm,
    % minimum width = 6.35cm,
    % label={[xshift=-9mm,yshift=-5mm]north east:{\bfseries $\times33$}}
    % ] (image) at (5.83, -0.6) {};

    \node [
      draw=  red!50!black,
      fill = red!10,
      rectangle,
      rounded corners=2pt,
      minimum width=0.5cm,
      minimum height=0.5cm,
      text width=1.7cm,
      align=center,
    ] (block1) {
      Convolution Block
    };
    \node[
      draw,
      fill = white,
      rectangle,
      rounded corners=2pt,
      minimum width=0.5cm,
      minimum height=0.5cm,
      text width=1.0cm,
      align=center,
      right = 0.3cm of block1
    ] (pool) {
      Pool
    };

    \node at (8.75, 1.1) {\bfseries $\times 33$};
    \begin{scope}[local bounding box=times]
      \node [
        draw= red!50!black,
        fill = red!10,
        rectangle,
        rounded corners=2pt,
        minimum width=0.5cm,
        minimum height=0.5cm,
        text width=1.7cm,
        align=center,
        right = 0.45cm of pool
      ] (block2) {
        Convolution Block $\times 2$
      };
      \node[
        draw,
        fill = white,
        rectangle,
        rounded corners=2pt,
        minimum width=0.5cm,
        minimum height=0.5cm,
        text width=1.7cm,
        align=center,
        right = 0.32cm of block2,
      ] (conv1) {
        2D-Convolution
      };
      \node [
        draw,
        fill = white,
        rectangle,
        rounded corners=2pt,
        minimum width=0.5cm,
        minimum height=0.5cm,
        text width=1.1cm,
        align=center,
        right = 0.3cm of conv1,
      ] (bn1) {
        Batch Norm
      };

      \node[
        draw,
        fill = white,
        rectangle,
        rounded corners=2pt,
        minimum width=0.5cm,
        minimum height=0.5cm,
        text width=1.7cm,
        align=center,
        below = 0.3cm of conv1,
      ] (conv2) {
        2D-Convolution
      };
      \node [
        draw,
        fill = white,
        rectangle,
        rounded corners=2pt,
        minimum width=0.5cm,
        minimum height=0.5cm,
        text width=1.2cm,
        align=center,
        below = 0.3cm of bn1,
      ] (bn2) {
        Batch Norm
      };
    \end{scope}

    \node [
      draw,
      fill = white,
      rectangle,
      rounded corners=2pt,
      minimum width=0.5cm,
      minimum height=0.5cm,
      text width=1.1cm,
      align=center,
      right = 0.3cm of bn1,
    ] (relu) {
      ReLU
    };

    \draw[-latex] (block1) -- (pool);
    \draw[-latex] (pool) -- (block2);
    \draw[-latex] (block2) -- (conv1);
    \draw[-latex] (conv1) -- (bn1);
    \draw[-latex] (bn1) -- (relu);
    \draw[-latex] ([xshift = 2mm] pool.east) |- (conv2);
    \draw[-latex] (conv2) -- (bn2);
    \draw[-latex] (bn2.east) -| (relu.south);

    % \node[
    %     draw,
    %     % rounded corners,
    %     dashed,
    %     inner sep=7.5pt,
    %     fit=(block2)(relu)(bn2),
    %     label={[xshift=-9mm,yshift=-5mm]north east:{\bfseries $\times33$}}
    %   ] (res_layer) {};

    \begin{pgfonlayer}{background}
      \node[draw, rounded corners=2pt, dashed, fit=(times)] {};
    \end{pgfonlayer}

  \end{tikzpicture}
  \caption{\textbf{Image encoder architecture}. The model begins with an initial convolution block and pooling layer, followed by two stacked convolution blocks. A residual block, repeated 33 times, combines pairs of 2D convolutions with batch normalization and a ReLU activation.}
  \label{fig:img_enc}
\end{figure}

\subsubsection{Audio}

Audio signals are first converted to time-frequency representations using Short-Time Fourier (STFT).
The resulting spectrograms are passed through a Mel filter bank to get the Mel spectrogram that represents the energy levels for each of the frequency bands.
Finally, to obtain a more stable signal, the logarithm of the Mel spectrogram is calculated and flattened to be used as input data.
% Each spectrogram feature map is flattened over the frequency and time dimensions to produce a sequence representation.

The processed input sequences are fed to a Pretrained audio neural network (Panns) backbone architecture from \textcite{kong2020pannslargescalepretrainedaudio}---see \cref{fig:audio_enc}.
% The output of the audio encoder is denoted as and used as input for the self-attention module.
% The output is a sequence of embeddings given by $E^{\text{audio}} \in \mathbb{R}^{T_{\text{audio}}\times d_{\text{model}}}$.

%%%%%%%%%%%%%%%%%%%%%%%%%%%%%%%%
%%%%%%%%%%%%%%%%%%%%%%%%%%%%%%%%
%%%%%%%% Audio Model %%%%%%%%%%%
%%%%%%%%%%%%%%%%%%%%%%%%%%%%%%%%
%%%%%%%%%%%%%%%%%%%%%%%%%%%%%%%%
\begin{figure}[t]
  % \centering
  \begin{tikzpicture}[scale=0.775, transform shape]
    % \node[
    %     draw,
    %     fill = orange!7,
    %     rounded corners,
    %     minimum height = 2.6cm,
    %     minimum width = 11.4cm,
    %     % label= above:{\bfseries Audio Encoder}
    %   ] (audio) at (4.35, 0) {};

    % \node[
    %     draw,
    %     dashed, 
    %     minimum height = 2.2cm,
    %     minimum width = 4.1cm,
    %     label= {[xshift=-7mm,yshift=-5mm]north east:{\bfseries $\times6$}}
    %   ] (audio) at (3.3, 0) {};
    \node [
      draw,
      fill = white,
      rectangle,
      rounded corners=2pt,
      minimum width=0.5cm,
      minimum height=0.5cm,
      text width=2.0cm,
      align=center
    ] (aug) {
      Augmented Spectrogram
    };

    \node at (5, 1.1) {\bfseries $\times 6$};
    \begin{scope}[local bounding box=times]
      \node [
        draw= red!50!black,
        fill = red!10,
        rectangle,
        rounded corners=2pt,
        minimum width=0.5cm,
        minimum height=0.5cm,
        text width=1.7cm,
        align=center,
        right = 0.4cm of aug
      ] (block1) {
        Convolution Block $\times 2$
      };
      \node [
        draw,
        fill = white,
        rectangle,
        rounded corners=2pt,
        minimum width=0.5cm,
        minimum height=0.5cm,
        text width=1.15cm,
        align=center,
        right = 0.3cm of block1
      ] (pool) {
        Pool \\ + \\ Dropout
      };
    \end{scope}
    % \node [
    %     draw,
    %     fill = white,
    %     rectangle,
    %     rounded corners=2pt,
    %     minimum width=0.5cm,     
    %     minimum height=0.5cm,    
    %     text width=1.2cm,      
    %     align=center, 
    %     right = 0.5cm of pool
    %   ] (drop1) {
    %     Dropout
    %   };

    \node [
      draw,
      fill = white,
      rectangle,
      rounded corners=2pt,
      minimum width=0.5cm,
      minimum height=0.5cm,
      text width=2.4cm,
      align=center,
      right = 0.4cm of pool
    ] (fc) {
      Fully-Connected Layer
    };
    \node [
      draw,
      fill = white,
      rectangle,
      rounded corners=2pt,
      minimum width=0.5cm,
      minimum height=0.5cm,
      text width=1.15cm,
      align=center,
      right = 0.3cm of fc
    ] (relu) {
      ReLU \\ + \\ Dropout
    };
    % \node [
    %     draw,
    %     fill = white,
    %     rectangle,
    %     rounded corners=2pt,
    %     minimum width=0.5cm,     
    %     minimum height=0.5cm,    
    %     text width=1.2cm,      
    %     align=center, 
    %     right = 0.5cm of relu
    %   ] (drop4) {
    %     Dropout
    %   }; 

    \draw[-latex] (aug) -- (block1);
    \draw[-latex] (block1) -- (pool);
    % \draw[-latex] (pool) -- (drop1);
    \draw[-latex] (pool) -- (fc);
    \draw[-latex] (fc) -- (relu);
    % \draw[-latex] (relu) -- (drop4);

    % \node[
    %     draw,
    %     dashed,
    %     rounded corners,
    %     inner sep=6pt,
    %     fit=(block1)(pool),
    %     label={[xshift=-1mm,yshift=-5mm]north east:{\bfseries $\times6$}}
    %   ] (audio_conv_bloc) {};

    \begin{pgfonlayer}{background}
      \node[draw, rounded corners=2pt, dashed, fit=(times)] {};
    \end{pgfonlayer}

  \end{tikzpicture}
  \caption{\textbf{Audio encoder architecture}. The model processes augmented spectrograms through two consecutive convolutional blocks, followed by a sequence of pooling and dropout layers repeated six times, and finally a fully connected layer with ReLU and dropout.}
  \label{fig:audio_enc}
\end{figure}
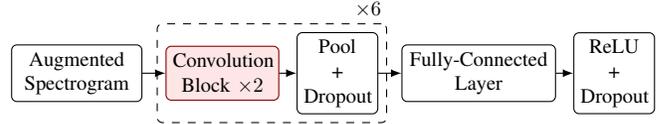

\subsection{Implementation Details}
% All baselines and our proposed methods are trained and evaluated under the same data splits and preprocessing steps for each dataset to ensure fair comparison between the uni-modal baseline performance and the same model after applying the proposed transfer learning approach.
Unless otherwise stated, models are trained using the Adam optimizer with an attention size of 512 and ReLU activation functions.
% A fixed random seed was set for reproducibility, and 
Dropout and weight decay were used for regularization and
optimized using a grid search; \cref{tab:hyperparams} summarizes the exact value of each hyper-parameters for all experiments on a given dataset.

\subsection{Evaluation Metrics}

% The aim of this paper is to improve model performance by transferring knowledge from a pre-trained source model to a target model trained on a different modality.
To comprehensively assess the model performance, we measure 1) the \textit{accuracy} to measure the proportion of correctly predicted samples over the total number of samples, 2) the \textit{precision} to measure the proportion of predicted positive samples that are actually correct, reflecting the model's ability to avoid false positives, 3) the \textit{recall} to measure the proportion of actual positive samples that are correctly identified, indicating the model's ability to avoid false negatives and detect only relevant samples, and 4) the \textit{F1-score}, which is the harmonic mean of precision and recall, providing a balanced measure that accounts for both false positives and false negatives.

\section{Results}
\label{sec:eval}

We first present the overall performance across all datasets and modalities for in-distribution (ID) and out-of-distribution (OOD) scenarios.
% The key difference between the in-distribution and out-of-distribution scenarios lies in the dataset split.
In the ID scenario, data subjects can be present in the training, validation, and test sets.
% , leading to higher performance.
On the other hand, in OOD scenarios, the training, validation, and test sets have no overlapping subjects,
% , each subject is present in one set only, 
making this split more challenging than ID, but also closer to usual real-world scenarios.
% , but also more challenging for model generalization.
For more details on the ID and OOD scenarios, see \textcite{haddad2025uac}.

To evaluate knowledge transfer, our method is compared against uni-modal baselines.
We also provide ablation studies to analyze the impact of the masked cross-modal alignment of \cref{sec:pos_trans} (denoted as MCMA in the experiments), and of the transfer knowledge weight factor ($\lambda$).

\subsection{Comparison to Uni-modal Baseline Models}

% This section reports results for both in-distribution and OOD scenarios, comparing the proposed method to the corresponding uni-modal baseline models.

\subsubsection{In-distribution Scenario}
\label{sec:eval:id}
% The results for the in-distribution split are shown in Table \ref{tab:indist}. It is expected in this case for the weaker model to benefit from the information of the better-performing model. This backed up by the data in the table. The proosed method out-performs the baseline models in the cases where transfer learning is occurring of the stronger teaching the weaker. It is apparent in the case of the UESTC-IMU where the performance improves by $4\%$, and $3\%$ for the case of the Cough-IMU dataset. It s also expected to witness a decrease in performance when the weaker model transfers knowledge to the better performing model, hwich is seen in the results of the table, where or both cases the Image and the audio model accuracies decrease, however their decrease is not significant, so the models were still able to learn on their own.

The results for the ID scenario are presented in \cref{tab:indist}.
As hypothesized, knowledge transfer from more informative sensor modalities (video and audio) improved the performance of the less expressive sensor modality (IMU) on all metrics---our proposed method outperforms the corresponding uni-modal IMU models in accuracy, recall, precision, and F1-score.
%  when transfer learning occurs from the stronger to the weaker modality.
For example, UESTC-IMU accuracy increased by $9\%$, its recall by $8\%$, its precision by $6\%$, and its F1-score by $8\%$.
Similarly, Cough-IMU improves its accuracy by $4\%$, its recall by $3\%$, its precision by $6\%$, and its F1-score by $6\%$.
% Furthermore, such improvements are only seen when positive-transfer is applied; using the cross-attention loss for all input (positive and negative) of the source modality, while leading to a slight increase in the metrics, performs worse than positive-transfer.

On the other hand, one can see that transferring knowledge from the less expressive sensor modality to the more expressive ones does not lead to improvements and can even create a slight decrease in performance.
This is observed for both the image and audio models in \cref{tab:indist}.
% However, the drop is small, indicating that the stronger models are still able to learn effectively on their own despite the transfer.

In conclusion, in the ID scenario, our method helps improve the general performance of IMU-based model but does not generate a positive outcome when trying to transfer knowledge from IMU to other, more expressive, sensor modalities.
Hence, given two sensor modalities, our method can be used to improve the performance of lower-quality model using a better-performing model, without coupling both sensor modality for inference.

\begin{table}[t]
    \centering
    \vspace{1.45mm}
    \caption{\textbf{In-distribution results.} Performance comparison between baseline models and the proposed transfer learning framework with and without masked alignment (positive transfer) across all datasets and modalities.
        Since VGGSound does not have annotations for users, we could only run the experiment on this dataset for out-of-distribution (see \cref{tab:ood}).
        The baseline for IMU data correspond to the Samosa architecture~\cite{Samosa}, Resnet-101 for images, and \textcite{kong2020pannslargescalepretrainedaudio} for audio data.
    }
    \label{tab:indist}
    \resizebox{\columnwidth}{!}{%
        \begin{tabular}{lllcccc}
            \toprule
            Dataset & Modality & Metric               & Baseline       & \makecell{Ours} & \makecell{No      \\MCMA}\\
            \midrule
            UESTC   & IMU      & Accuracy $\uparrow$  & 0.877          & \textbf{0.969}  & \underline{0.914} \\
                    &          & Recall $\uparrow$    & 0.895          & \textbf{0.967}  & \underline{0.918} \\
                    &          & Precision $\uparrow$ & 0.900          & \textbf{0.974}  & \underline{0.913} \\
                    &          & F1-Score$\uparrow$   & 0.893          & \textbf{0.967}  & \underline{0.914} \\
            \cmidrule(lr){2-6}
                    & Image    & Accuracy $\uparrow$  & \textbf{0.957} & 0.939           & \underline{0.945} \\
                    &          & Recall $\uparrow$    & \textbf{0.953} & 0.922           & \underline{0.934} \\
                    &          & Precision $\uparrow$ & \textbf{0.957} & 0.942           & \underline{0.955} \\
                    &          & F1-Score$\uparrow$   & \textbf{0.954} & 0.925           & \underline{0.943} \\
            \midrule
            Cough   & IMU      & Accuracy $\uparrow$  & 0.440          & \textbf{0.474}  & \underline{0.463} \\
                    &          & Recall $\uparrow$    & 0.268          & \textbf{0.304}  & \underline{0.297} \\
                    &          & Precision $\uparrow$ & 0.185          & \textbf{0.243}  & \underline{0.238} \\
                    &          & F1-Score$\uparrow$   & 0.205          & \textbf{0.256}  & \underline{0.251} \\
            \cmidrule(lr){2-6}
                    & Audio    & Accuracy $\uparrow$  & \textbf{0.839} & 0.816           & \underline{0.825} \\
                    &          & Recall $\uparrow$    & \textbf{0.757} & 0.664           & \underline{0.683} \\
                    &          & Precision $\uparrow$ & \textbf{0.793} & 0.685           & \underline{0.694} \\
                    &          & F1-Score$\uparrow$   & \textbf{0.745} & 0.670           & \underline{0.685} \\

            \bottomrule
        \end{tabular}
    }
\end{table}

\subsubsection{Out-of-distribution Scenario}

% The results for the OOD scenario, shown in \cref{tab:ood}, confirm the expected drop in overall performance compared to the in-distribution case.
the OOD scenario introduces a greater generalization challenge than the ID scenario since each model is tested on subjects unseen during training.
Looking at the results in \cref{tab:ood}, unlike the ID scenario, knowledge transfer in the OOD case does not follow the pattern of the stronger modality consistently improving the weaker one.

For the UESTC dataset, no gain is observed for the IMU modality, while the image modality shows a clear improvement of $8\%$ in accuracy, $8\%$ in recall, $9\%$ in precision, and $9\%$ for the F1-score.
A similar result is observed for the Cough and VGGSound dataset.
For the Cough and VGG Sound dataset, the IMU and image classification models only see marginal improvements from knowledge transfer from the IMU modality.
On the other hand, the audio classification model performance improves thanks to knowledge transfer from the IMU modality---$7\%$ in accuracy, $1\%$ in recall, $9\%$ in precision, and $10\%$ for the F1-score, on the Cough dataset, and $3\%$ in accuracy, $5\%$ in recall, $2\%$ in precision, and $5\%$ for the F1-score, on VGGSound.

Through our experiments, we observed that the IMU and Image models overfitted on the training dataset.
We performed a grid search of hyperparameters, including varying learning rate between \{0.0005, 0.0001, 0.001, 0.005\}, window size for IMU models between \{20, 50, 70\}, dropout rate between \{0.5, 0.6, 0,7, 0.8\}, and various train, test and validation splits, and were unable to not overfit the models and reach higher and similar metrics to the ones presented in \cref{tab:ood}.
While overfitting didn't impact the expected knowledge transfer in \cref{sec:eval:id}, we hypothesize that this overfitting on the training dataset led to the current results in OOD scenarios.
Indeed, our hypothesis is that, while an overfitted model is still able to provide meaningful knowledge to a target model (as in the case of IMU to Image in UESTC, IMU to Audio in Cough, and Image to Audio in VGGSound) in ID scenarios where user are both in the train and test sets, the same overfitted model cannot help improving the generalization in OOD scenario since it is overfit to the specific users in the training set.

% This follows the general trend where the stronger modality provides aids the weaker one.
Overall, these results suggest that, in OD scenarios, even higher performing classification models benefit from knowledge transfer from lower performing models, demonstrating that cross-modal transfer can improve performances under distribution shift.
While less accurate models are not able to transfer knowledge in ID scenario---since most of the information is contained in the training set---in OOD scenarios, the added knowledge provided by the weaker modality help generalize the capability of stronger classification models.

% indicates that the weaker IMU modality can provide useful information for the stronger audio modality when generalization to unseen subjects is required.
% Finally, in the VGGSound dataset, the image modality remains stable, while the audio modality achieves a slight improvement of $3\%$, despite already having a high baseline accuracy.

% Overall, these results suggest that depending on the dataset, even strong modalities can benefit, demonstrating that cross-modal transfer improves performance under distribution shift.
% At the same time, the OOD experiments highlight that the direction of knowledge transfer is not always predictable.
% One possible explanation for this unpredictability is overfitting in the base models: modalities that show little or no improvement under OOD transfer are also those where the baseline models exhibit signs of overfitting.

\begin{table}[t]
    \centering
    \vspace{1.45mm}
    \caption{\textbf{Out-of-distribution results.} Performance comparison between baseline models and the proposed transfer learning framework with and without masked alignment (positive transfer) across all datasets and modalities.
        The baseline for IMU data correspond to the Samosa architecture~\cite{Samosa}, Resnet-101 for images, and \textcite{kong2020pannslargescalepretrainedaudio} for audio data.
    }
    \resizebox{\columnwidth}{!}{%
        \begin{tabular}{lllccc}
            \toprule
            Dataset & Modality & Metric               & Baseline          & \makecell{Ours}   & \makecell{No      \\MCMA}\\
            \midrule
            UESTC   & IMU      & Accuracy $\uparrow$  & \textbf{0.554}    & \underline{0.470} & 0.411             \\
                    &          & Recall $\uparrow$    & \textbf{0.556}    & \underline{0.467} & 0.414             \\
                    &          & Precision $\uparrow$ & \textbf{0.554}    & \underline{0.475} & 0.449             \\
                    &          & F1-Score$\uparrow$   & \textbf{0.522}    & \underline{0.450} & 0.385             \\
            \cmidrule(lr){2-6}
                    & Image    & Accuracy $\uparrow$  & 0.518             & \textbf{0.601}    & \underline{0.548} \\
                    &          & Recall $\uparrow$    & 0.518             & \textbf{0.595}    & \underline{0.550} \\
                    &          & Precision $\uparrow$ & 0.540             & \textbf{0.627}    & \underline{0.554} \\
                    &          & F1-Score$\uparrow$   & 0.508             & \textbf{0.597}    & \underline{0.543} \\

            \midrule
            Cough   & IMU      & Accuracy $\uparrow$  & 0.416             & \textbf{0.423}    & \textbf{0.423}    \\
                    &          & Recall $\uparrow$    & \textbf{0.309}    & \textbf{0.309}    & \textbf{0.309}    \\
                    &          & Precision $\uparrow$ & 0.250             & \underline{0.259} & \textbf{0.267}    \\
                    &          & F1-Score$\uparrow$   & \textbf{0.250}    & \underline{0.249} & 0.248             \\
            \cmidrule(lr){2-6}
                    & Audio    & Accuracy $\uparrow$  & 0.607             & \underline{0.680} & \textbf{0.709}    \\
                    &          & Recall $\uparrow$    & 0.533             & \underline{0.618} & \textbf{0.653}    \\
                    &          & Precision $\uparrow$ & 0.477             & \underline{0.572} & \textbf{0.686}    \\
                    &          & F1-Score$\uparrow$   & 0.482             & \underline{0.578} & \textbf{0.636}    \\

            \midrule
            VGG-    & Image    & Accuracy $\uparrow$  & \underline{0.459} & 0.456             & \textbf{0.466}    \\
            Sound   &          & Recall $\uparrow$    & \textbf{0.330}    & 0.324             & \underline{0.325} \\
                    &          & Precision $\uparrow$ & \textbf{0.309}    & \underline{0.285} & 0.267             \\
                    &          & F1-Score$\uparrow$   & \underline{0.294} & \textbf{0.297}    & 0.285             \\
            \cmidrule(lr){2-6}
                    & Audio    & Accuracy $\uparrow$  & 0.897             & \underline{0.926} & \textbf{0.928}    \\
                    &          & Recall $\uparrow$    & 0.828             & \underline{0.883} & \textbf{0.891}    \\
                    &          & Precision $\uparrow$ & 0.889             & \underline{0.909} & \textbf{0.910}    \\
                    &          & F1-Score$\uparrow$   & 0.839             & \underline{0.892} & \textbf{0.898}    \\

            \bottomrule
        \end{tabular}
    }
    \label{tab:ood}
\end{table}

\subsection{Masked Cross-Modal Alignment}

Tables~\ref{tab:indist} and~\ref{tab:ood} report the results with and without masked alignment. As described in Section~\ref{sec:pos_trans}, our method refers to the masked alignment strategy, where knowledge transfer is restricted to samples that are correctly classified by the source modality. In contrast, no MMCA denotes the case where knowledge is transferred across all samples, regardless of correctness

In the in-distribution split, whenever knowledge transfer is effective, our method consistently outperforms no MMCA, as expected. However, in the OOD scenario, our method does not always yield better performance. For example, in the Cough-Audio dataset and in both VGGSound modalities, no MMCA slightly outperforms our method. Nevertheless, these differences are marginal; looking at the models that are improved by cross-modal alignment, it is $5\%$ for Cough-Audio metrics, less than $1\%$ for all metrics of VGGSound-Audio, and $5\%$ on the metrics of UESTC-Image.
Hence, in OOD scenarios, masked cross-modal alignment should be used on a per-dataset basis.

\subsection{Transfer Knowledge Weight Factor}

The transfer knowledge weight factor $\lambda$, is an essential hyper-parameter of the objective function \cref{eq:full_loss}.
To evaluate the impact of this parameter, we conduct an ablation study where $\lambda$ is varied over the set $\{0.1, 1.0, 10\}$.
Results for all datasets, on the more challenging OOD scenario, are shown in \cref{tab:lambda}.
% Performance is expected to improve as $\lambda$ increases up to an optimal value, and then degrade as the transfer loss begins to dominate the training objective.
% However, under the OOD split this trend becomes less predictable due to the added generalization challenge.

Results demonstrate that while $\lambda$ has some impact on the results, especially in higher values (e.g., $\lambda=10$), the results are best at around $\lambda=1$---with little variation between $0.01$ and $1$---apart from UESTC-IMU.
% The results show that the method improves performance for at least one choice of $\lambda$ across all datasets.
Hence, in previous experiments (\cref{tab:indist} and~\ref{tab:ood}) $\lambda=1$ was used.
% , as this setting provides consistent improvements, on several datasets and modalities, even if it is not always the absolute best.

\begin{table}[t]
    \centering
    \vspace{1.45mm}
    \caption{\textbf{Effect of the cross-attention weight factor $\lambda$.} Results on the OOD split for all datasets and modalities when varying the cross-attention weight $\lambda \in \{0.01, 0.1, 1.0, 10\}$}
    \resizebox{\columnwidth}{!}{%
        \begin{tabular}{lllccccc}
            \toprule
            Dataset & Modality & $\lambda$         & Accuracy          & Recall            & Precision         & \makecell{F1-     \\Score}      \\
            \midrule
            % UESTC-IMU   & $  0$             & 55.36          & 55.56          & \textbf{55.42} & 52.18          \\
            UESTC   & IMU      & $  0.01$          & \underline{0.524} & \underline{0.519} & \underline{0.519} & \underline{0.506} \\
                    &          & \textbf{$  0.1$}  & \textbf{0.566}    & \textbf{0.557}    & \textbf{0.537}    & \textbf{0.534}    \\
                    &          & $  1.0$           & 0.470             & 0.467             & 0.475             & 0.450             \\
                    &          & $  10$            & 0.321             & 0.304             & 0.176             & 0.197             \\
            %   &           & $  100$         & 25.00          & 0.34          & 0.17              & 0.16      \\
            \cmidrule(lr){2-7}
            % UESTC-Image & $  0$             & 51.79          & 51.79          & 54.01          & 50.78          \\
                    & Image    & $  0.01$          & 0.470             & 0.473             & \underline{0.536} & 0.480             \\
                    &          & $  0.1$           & 0.427             & 0.427             & 0.472             & 0.416             \\
                    &          & \textbf{$  1.0$}  & \textbf{0.601}    & \textbf{0.600}    & \textbf{0.627}    & \textbf{0.597}    \\
                    &          & $  10$            & \underline{0.518} & \underline{0.513} & 0.531             & \underline{0.506} \\
            %   &           & $  100$         & 50.60          & 0.51          &  0.56             & 0.50   \\
            \midrule
            % Cough-IMU   & $  0$             & 41.59          & \textbf{30.88} & 25.04          & \textbf{25.00} \\
            Cough   & IMU      & $  0.01$          & \underline{0.421} & 0.303             & \underline{0.251} & \underline{0.239} \\
                    &          & $  0.1$           & \underline{0.421} & \underline{0.304} & 0.242             & \underline{0.239} \\
                    &          & \textbf{$  1.0$}  & \textbf{0.423}    & \textbf{0.309}    & \textbf{0.259}    & \textbf{0.249}    \\
                    &          & $  10$            & 0.416             & 0.297             & 0.226             & 0.227             \\
            %   &           & $  100$         &  19.55         & 0.15          &  0.15             &   0.07    \\          
            \cmidrule(lr){2-7}
            % Cough-Audio & $  0$             & 60.68          & 0.61           & 0.48           & 0.48           \\
                    & Audio    & \textbf{$  0.01$} & \textbf{0.686}    & \textbf{0.653}    & \textbf{0.728}    & \textbf{0.608}    \\
                    &          & $  0.1$           & 0.652             & 0.597             & 0.578             & 0.563             \\
                    &          & $  1.0$           & \underline{0.680} & \underline{0.618} & 0.572             & \underline{0.578} \\
                    &          & $  10$            & 0.611             & 0.539             & \underline{0.598} & 0.495             \\
            %   &           & $  100$         &  52.27         &  0.42         &  0.31             &  0.34       \\
            \midrule

            % VGGSound-   & $  0$             & 45.89          & 32.98          & 30.93          & 29.41          \\
            VGG-    & Image    & $  0.01$          & 0.452             & \textbf{0.345}    & \underline{0.295} & \textbf{0.315}    \\
            Sound   &          & $  0.1$           & \textbf{0.473}    & \underline{0.337} & 0.262             & 0.293             \\
                    &          & $  1.0$           & \underline{0.456} & 0.324             & 0.285             & \underline{0.297} \\
                    &          & $  10$            & 0.452             & 0.326             & \textbf{0.364}    & 0.291             \\
            %   &           & \textbf{$  100$}& \textbf{47.26} &  \textbf{0.38}&   \textbf{0.37}   &   \textbf{0.37} \\
            \cmidrule(lr){2-7}
            % VGGSound-   & $  0$             & 89.73          & 82.79          & 88.86          & 83.90          \\
                    & Audio    & \textbf{$  0.01$} & \textbf{0.937}    & \textbf{0.910}    & \textbf{0.938}    & \textbf{0.916}    \\
                    &          & $  0.1$           & 0.911             & 0.859             & 0.892             & 0.870             \\
                    &          & $  1.0$           & \underline{0.926} & \underline{0.883} & \underline{0.909} & \underline{0.892} \\
                    &          & $  10$            & 0.896             & 0.811             & 0.850             & 0.812             \\
            %   &           & $  100$         &  72.43         &  0.58         &   0.55            &   0.55          \\
            \bottomrule
        \end{tabular}}
    \label{tab:lambda}
\end{table}

\section{Conclusion and future work}

While HAR often relies on multiple sensors, deploying such multi-modal sensor system in real-world settings and at scale is often infeasible due to cost or technical constraints.
To address this, we present D-CAT, a decoupled cross-attention transfer learning framework that aligns modality-specific feature spaces during training while enabling single-modality inference.
Unlike prior work that fuses modalities via cross-attention layers, D-CAT enforces alignment through a cross-attention loss, ensuring that each modality's classifier remains independent and eliminates the need for paired inputs.

We evaluate D-CAT on three HAR benchmarks using IMU, video, and audio.
When transferring knowledge from a high-accuracy source modality to a weaker target classifier, D-CAT improves F1 scores by up to $7$\% in in-distribution settings (train-test subject overlap).

Furthermore, in out-of-distribution evaluation (unseen users in test set), D-CAT even enables knowledge transfer from low to high accuracy modalities, suggesting that its decoupled alignment may improve robustness to domain shift.
These results demonstrate that D-CAT offers a scalable alternative to joint training, particularly for resource-constrained applications where sensor availability is limited.

However, D-CAT's effectiveness decreases when the target model is already overfitted, limiting its utility in some out-of-distribution scenarios.
Future work will explore strategies to mitigate this limitation, as well as explore the use of more than one source modality for transfer.
Overall, our findings suggest that decoupled cross-attention could unlock modular and efficient knowledge transfer for more-accurate HAR with limited sensors.

\printbibliography
\end{document}